\newtheorem{theorem}{Theorem}[section]
\newtheorem{lemma}[theorem]{Lemma}
\newtheorem{proposition}[theorem]{Proposition}
\renewenvironment{proof}{\par\noindent{\bf Proof.\ }}{\hfill$\square$\par}
\newcommand{\bitem}{\begin{itemize}}
\newcommand{\eitem}{\end{itemize}}
\newcommand{\mc}[1]{\mathcal{#1}}
\newcommand{\N}{\mathbb{N}}
\newcommand{\R}{\mathbb{R}}
\newcommand{\bpm}{\begin{pmatrix}}
\newcommand{\epm}{\end{pmatrix}}
\newcommand{\bvm}{\begin{vmatrix}}
\newcommand{\evm}{\end{vmatrix}}
\newcommand{\bsm}{\left(\begin{smallmatrix}}
\newcommand{\esm}{\end{smallmatrix}\right)}
\newcommand{\T}{\top}
\newcommand{\ol}[1]{\overline{#1}}
\newcommand{\wt}[1]{\widetilde{#1}}
\newcommand{\la}{\langle}
\newcommand{\ra}{\rangle}
\newcommand{\mfk}[1]{\mathfrak{#1}}
\newcommand{\w}{\omega}
\newcommand{\gdw}{\Leftrightarrow}
\newcommand{\eins}{\mathbb{1}}
\DeclareMathSymbol{\mydiv}{\mathbin}{symbols}{"04}
\DeclareMathOperator{\Diag}{Diag}
\DeclareMathOperator{\argmax}{arg max}
\DeclareMathOperator{\vvec}{vec}
\DeclareMathOperator{\ggrad}{grad}
\DeclareMathOperator{\Var}{var}
\newcommand{\sst}[1]{{\scriptscriptstyle #1}}
\def\widebreve{\mathpalette\wide@breve}
\def\wide@breve#1#2{\sbox\z@{$#1#2$}%
     \mathop{\vbox{\m@th\ialign{##\crcr
\kern0.08em\brevefill#1{0.8\wd\z@}\crcr\noalign{\nointerlineskip}%
                    $\hss#1#2\hss$\crcr}}}\limits}
\def\brevefill#1#2{$\m@th\sbox\tw@{$#1($}%
  \hss\resizebox{#2}{\wd\tw@}{\rotatebox[origin=c]{90}{\upshape(}}\hss$}
\title{\textbf{Riemannian Patch Assignment Gradient Flows}}
\author[1]{Daniel Gonzalez-Alvarado}
\author[1]{Fabio Schlindwein}
\author[1,4]{Jonas Cassel}
\author[2]{Laura Steingruber}
\author[3]{Stefania Petra}
\author[1,4]{Christoph Schn\"orr}
\affil[1]{Inst. for Mathematics, Image and Pattern Analysis Group, Heidelberg University}
\affil[2]{Inst. of Theoretical Medicine, Anatomy and Cell Biology, University of Augsburg}
\affil[3]{Inst. of Mathematics, Math. Imaging Group \& CAAPS, University of Augsburg}
\affil[4]{Research Station Geometry and Dynamics, Heidelberg University}
\date{} 
\begin{document}

\maketitle

\begingroup
\renewcommand\thefootnote{}
\footnotetext{Accepted for publication in the Scale Space and Variational Methods in Computer Vision (SSVM) 2025 conference, to appear in the Lecture Notes in Computer Science (LNCS) series by Springer. The final version will be available on SpringerLink.}
\endgroup

\begin{abstract}
This paper introduces \textit{patch assignment flows} for metric data labeling on graphs. Labelings are determined by regularizing initial local labelings through the dynamic interaction of both labels and label assignments across the graph, entirely encoded by a dictionary of competing labeled patches and mediated by patch assignment variables. Maximal consistency of patch assignments is achieved by geometric numerical integration of a Riemannian ascent flow, as critical point of a Lagrangian action functional. Experiments illustrate properties of the approach, including uncertainty quantification of label assignments.

\medskip
\noindent\textbf{Keywords:} assignment flows, Riemannian gradient flows, statistical manifolds, information geometry.
\end{abstract}

\section{Introduction}\label{sec:introduction}
\textit{Assignment flows} \cite{Astrom:2017} denote a class of dynamical systems evolving on statistical manifolds which serve as models for `neural ODEs' on graphs obtained by geometric flow integration. A basic instance \cite{Savarino:2021wt} are systems of the form $\dot W = R_{W}[\Omega W]$ (see Eq.~\ref{eq:assignment_flow}  for the general form) which are  Riemannian gradient flows with respect to a potential, parametrized by a \textit{spatial} interaction matrix $\Omega(t)$, comprising a non-local graph Laplacian and a term steering the flow towards discrete decisions (label assignment). The time-variant parameters $\Omega(t)$ can be learned from data. 

By vectorization (see Section \ref{sec:approach1} for details), this basic flow equation takes the form $\dot w = R^{\mfk{v}}_{w}[(\Omega\otimes I_{c}) w]$ which suggests to generalize the parametrization to $\dot w = R^{\mfk{v}}_{w}[(\Omega\otimes \Omega_{c}) w]$, in order to take  additionally into account the \textit{interaction} between $[c]=\{1,\dotsc,c\}$ \textit{labels}, to be jointly assigned to data observed on a graph via the assignment flow $w(t)$. In fact, this formulation is general enough to cover a broad range of multi-population and multi-game dynamics \cite{Boll:2024aa}.

The \textbf{purpose of this paper} is to encode both the \textit{spatial and label interaction} for regularized label assignment flows \textit{entirely} in terms of a \textit{dictionary} of \textit{labeled template patches}. To this end, we extend assignment flows to Riemannian \textit{patch assignment flows} using the very same mathematical framework, which regularize a given initial data labeling by maximizing the consistency of labeled patches over the graph. The final label assignment results from dynamically interacting labeled patches with  overlapping supports and from the underlying information geometry which enforces assignments.

\textbf{Related work.} There is a vast literature on image denoising and restoration using \textit{continuous} patches, either in a classical way by enforcing structure sparsity 
\cite{Chatterjee:2012aa,Peyre:2009aa,Yu:2012aa} or more recently by, e.g., Gaussian mixture priors
\cite{Aguerrebere:2017aa,Houdard:2018aa}. By contrast, our approach exploits \textit{labeled patches}, where each label represents an equivalence class of continuous signals, for the structured prediction of \textit{labelings} of metric data observed on a graph.

\textbf{Basic Notation.} 
We set $\eins_{n}=(1,1,\dotsc,1)^{\T}\in\R^{n}$ for any $n\in\N$. The canonical inner product of vectors or matrices is denoted by $\la\cdot,\cdot\ra$. For a scalar-valued smooth objective function defined on a Euclidean space or a Riemannian manifold, the Euclidean gradient is denoted by $\partial J$ and the Riemannian gradient by $\ggrad J$, respectively. For matrices $P\in\R^{n\times k}$, the row vectors are denoted by $P_{i},\; i\in[n]:=\{1,2,\dotsc,n\}$. The set $\mc{C}=[c]$ indexed classes (categories).

\textbf{Organization.}
Section \ref{sec:preliminaries} introduces notation and concepts required in the rest of the paper. Section \ref{sec:pafs} introduces the \textit{patch assignment flow}, whose properties are experimentally illustrated in Section \ref{sec:experiments}. We conclude in Section \ref{sec:conclusion}.

\section{Preliminaries}\label{sec:preliminaries}


\subsection{Graphs and Patch Dictionaries.}
In this paper, the domain of observed data and variables are connected oriented graphs $\mc G_{\mc V} = (\mc V, \mc E_{\mc V})$, with vertices $\mc V =[n]$ and edge set $\mc E_{\mc V} \subset \mc{V}\times\mc{V}$. The relation $ij\in \mc E_{\mc{V}}$ is also denoted as $i\to j$.
The adjacency matrix of $\mc G_{\mc V}$ is
\begin{align}\label{eq:def-A-GV}
    A_{\mc V;i,j} = \begin{cases}  1, & \text{if} \ ij \in \mc E_{\mc V}, \\ 0, & \text{otherwise.}\end{cases}
\end{align}
Since $\mc G_{\mc V}$ is oriented, $A_{\mc V}$ generally is asymmetric, i.e.~$ij\in\mc{E}_{\mc{V}}$ does \textit{not} generally imply that $ji\in\mc{E}_{\mc{V}}$. 
We say that $\mc G_{\mc V}$ is \textit{labeled} if there is a mapping  
\begin{equation}\label{eq:def-ell-map}
\ell_{\mc{V}}\colon \mc{V}\to\mc{C}
\end{equation}
that assigns to each vertex $i\in\mc{V}$ a class label $\ell_{\mc{V}}(i)\in\mc{C}$. 
For \textit{grid graphs} $\mc{G}_{\mc{V}}$, the \textit{support} of a patch centered at $i\in\mc{V}$ is denoted as $[i]_{\sst{\mc{V}}} \subset\mc{V}$, with patch size $|[i]_{\sst{\mc{V}}}|=:p,\,\forall i\in\mc{V}$. If $i\neq j$ and patches supported on $[i]_{\sst{\mc{V}}}$ and $[j]_{\sst{\mc{V}}}$ have the same size, then they can be mapped to each other by translation. Abstracting from the center point yields a \textit{labeled patch template},  denoted by $d$.

A \textit{labeled patch dictionary} $\mc{D}$ is a collection of labeled patch templates $\{d \colon d\in\mc{D}\}$, all having the same size $|d|=p\in\N$. Centering  $d$ at pixel $i\in\mc{V}$ is denoted as $d_{[i]}$\footnote{The consistent notation would be $d_{[i]_{\sst{\mc{V}}}}$. Yet we avoid the double subscript to simplify notation, and the context disambiguates the meaning $[i]_{\sst{\mc{V}}}\subset\mc{V}$ of the subscript $d_{[i]}$, rather than $[i]=\{1,\dotsc,i\}$.}. Given a graph $\mc G_{\mc V} = (\mc V, \mc E_{\mc V})$, each patch dictionary $\mc D$ induces a directed \textit{patch dictionary graph} 
\begin{equation}\label{eq:def-G-mcD}
\mc G_{\mc D} = (\mc D, \mc{E}_{\mc D}), 
\end{equation}
with patch templates $d, d'\in\mc{D}$ being adjacent if 
\begin{equation}
dd'\in\mc{E}_{\mc{D}}
\quad\Leftrightarrow\quad
(ij\in\mc{E}_{\mc{V}})\wedge (\w_{d_{[i]}d'_{[j]}}> 0),
\end{equation}
where $\w\colon\mc{D}\times\mc{D}\to\R_{\geq 0}$ is a nonnegative similarity function. A basic example is the normalized agreement of the patch templates $d_{[i]}, d'_{[j]}$ on the intersection of the supports $[i]_{\sst{\mc{V}}}\cap[j]_{\sst{\mc{V}}}$, 
\begin{equation}\label{eq:om-patch-weight}
\w_{d_{[i]}d'_{[j]}} = \frac{1}{p}\big|\{k\in [i]_{\sst{\mc{V}}}\cap[j]_{\sst{\mc{V}}}\colon d_{[i]}(k)=d'_{[j]}(k)\}\big|,\qquad p = |[i]_{\sst{\mc{V}}}| = |[j]_{\sst{\mc{V}}}|.
\end{equation}
Here, $d_{[j]}(k)$ denotes the value at vertex $k\in\mc{V}$ of patch $d$, centered at $j\in\mc{V}$. A special case is the \textit{binary} similarity function
\begin{equation}\label{eq:om-patch-binary}
\w_{d_{[i]}d'_{[j]}} = \begin{cases}
1, &\text{if}\;d|_{[i]_{\sst{\mc{V}}}\cap[j]_{\sst{\mc{V}}}}\equiv d'|_{[i]_{\sst{\mc{V}}}\cap[j]_{\sst{\mc{V}}}}, \\
0, &\text{otherwise.}
\end{cases}
\end{equation}
These similarity values only depend on the position of vertices $i$ and $j$ \textit{relative} to each other, but are translation invariant otherwise. 
As a consequence, for the specific case of a two-dimensional grid graph $\mc{G}_{\mc{V}}$ with edge set 
\begin{equation}\label{eq:edgeset-v-h}
\mc{E}_{\mc{V}} = \mc{E}^{h}_{\mc{V}} \dot\cup \mc{E}^{v}_{\mc{V}}
\end{equation}
comprising horizontal and vertical edges, 
any similarity function $\w$ defines corresponding asymmetric \textit{weighted patch template adjacency matrices}
\begin{subequations}\label{eq:def-Om-G-D}
\begin{align}
\Omega^{h}_{\mc{D}}\in\R_{\geq 0}^{|\mc{D}|\times|\mc{D}|},\qquad
\Omega^{h}_{\mc{D};d,d'}
&= \w_{d_{[i]}d'_{[j]}}\quad\text{for any}\; ij\in\mc{E}^{h}_{\mc{V}}
\\
\Omega^{v}_{\mc{D}}\in\R_{\geq 0}^{|\mc{D}|\times|\mc{D}|},\qquad
\Omega^{v}_{\mc{D};d,d'}
&= \w_{d_{[i]}d'_{[j]}}\quad\text{for any}\; ij\in\mc{E}^{v}_{\mc{V}}
\end{align}
\end{subequations}
of the underlying patch dictionary graph $\mc{G}_{\mc{D}}$. See Figure \ref{fig:patch-adjacency} for an illustration.

\begin{figure}[ht]
\centerline{
\includegraphics[width=0.25\textwidth]{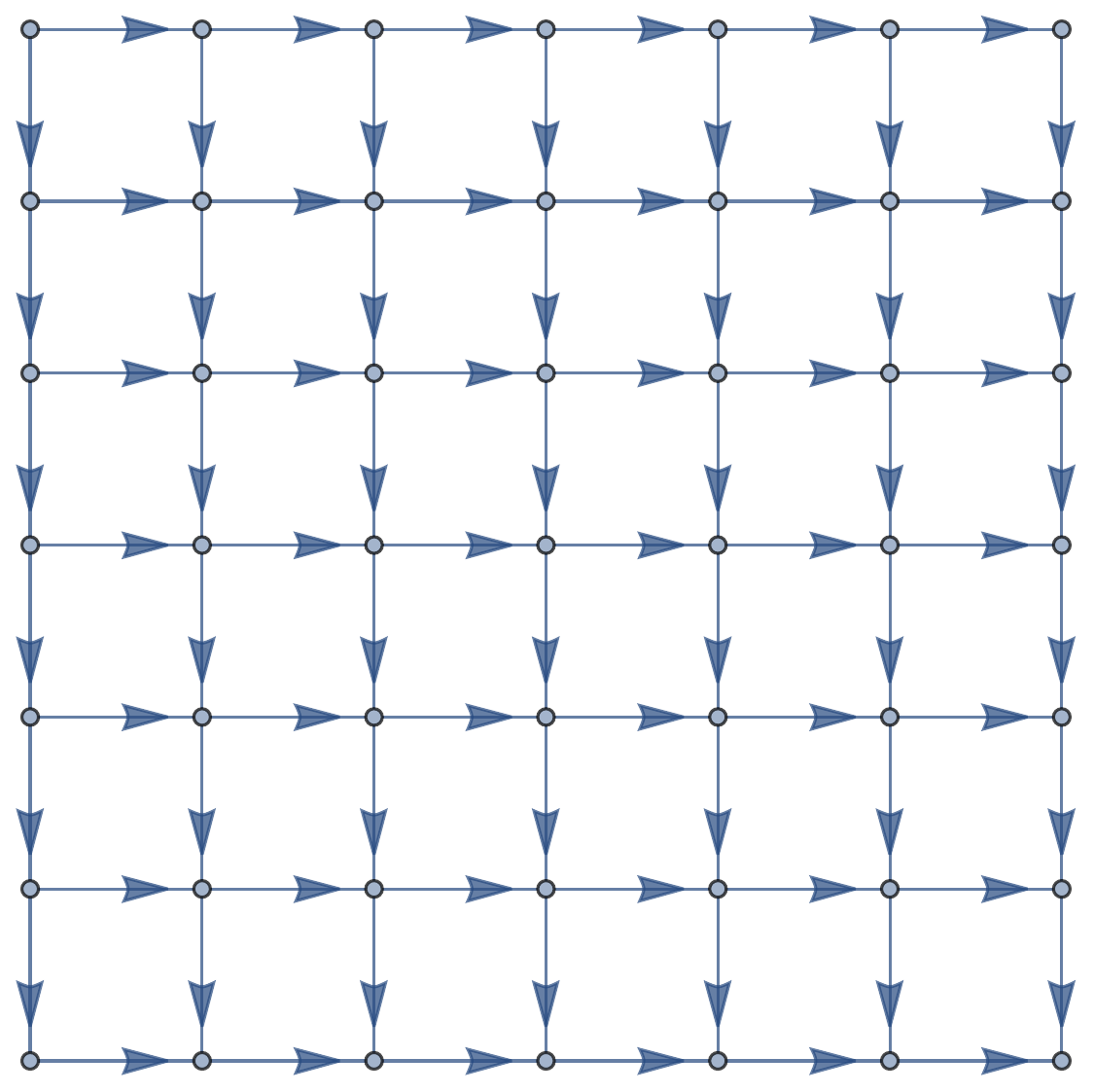}
\hspace{0.05\textwidth}
\includegraphics[width=0.25\textwidth]{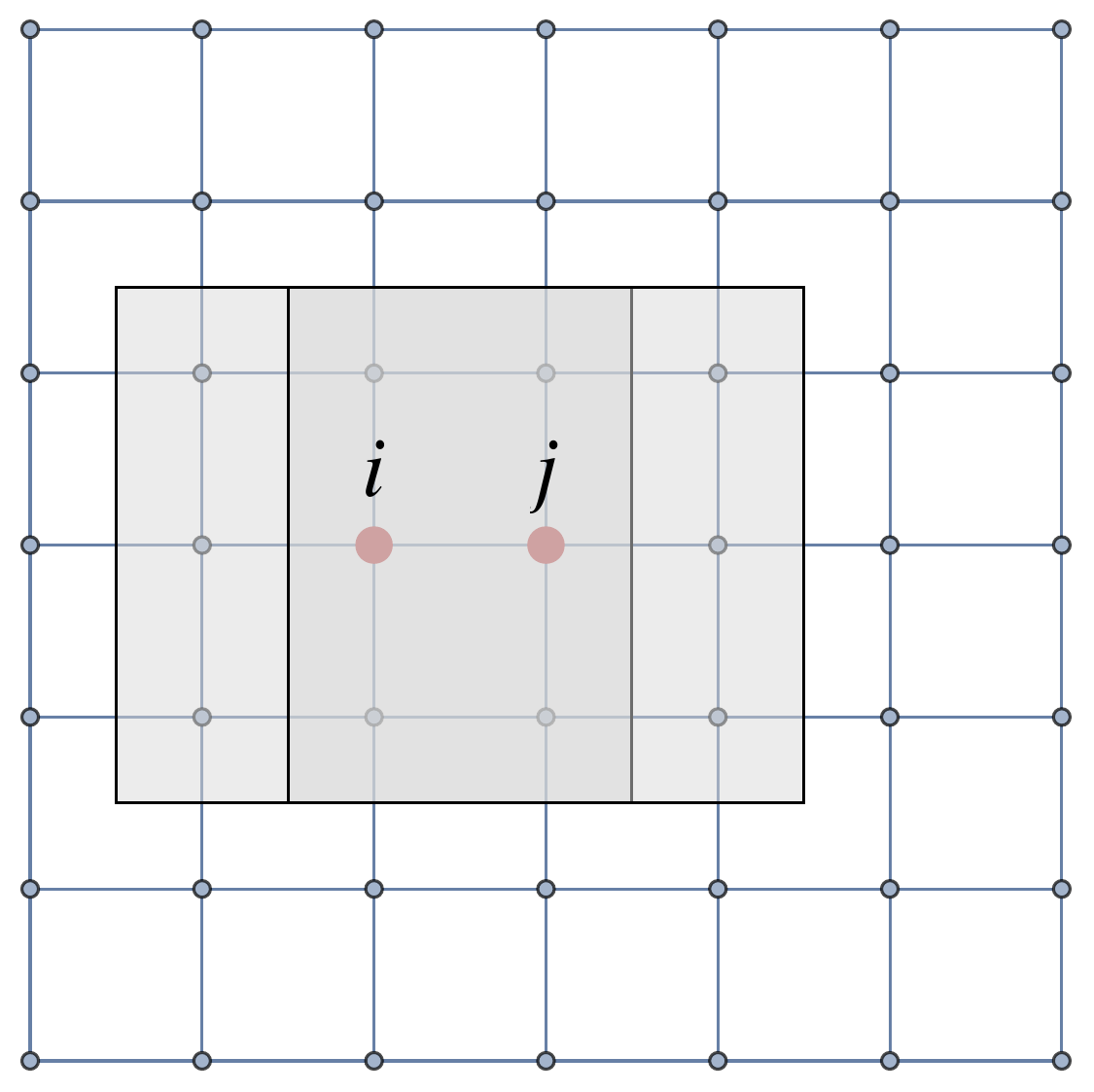}
\hspace{0.05\textwidth}
\includegraphics[width=0.28\textwidth]{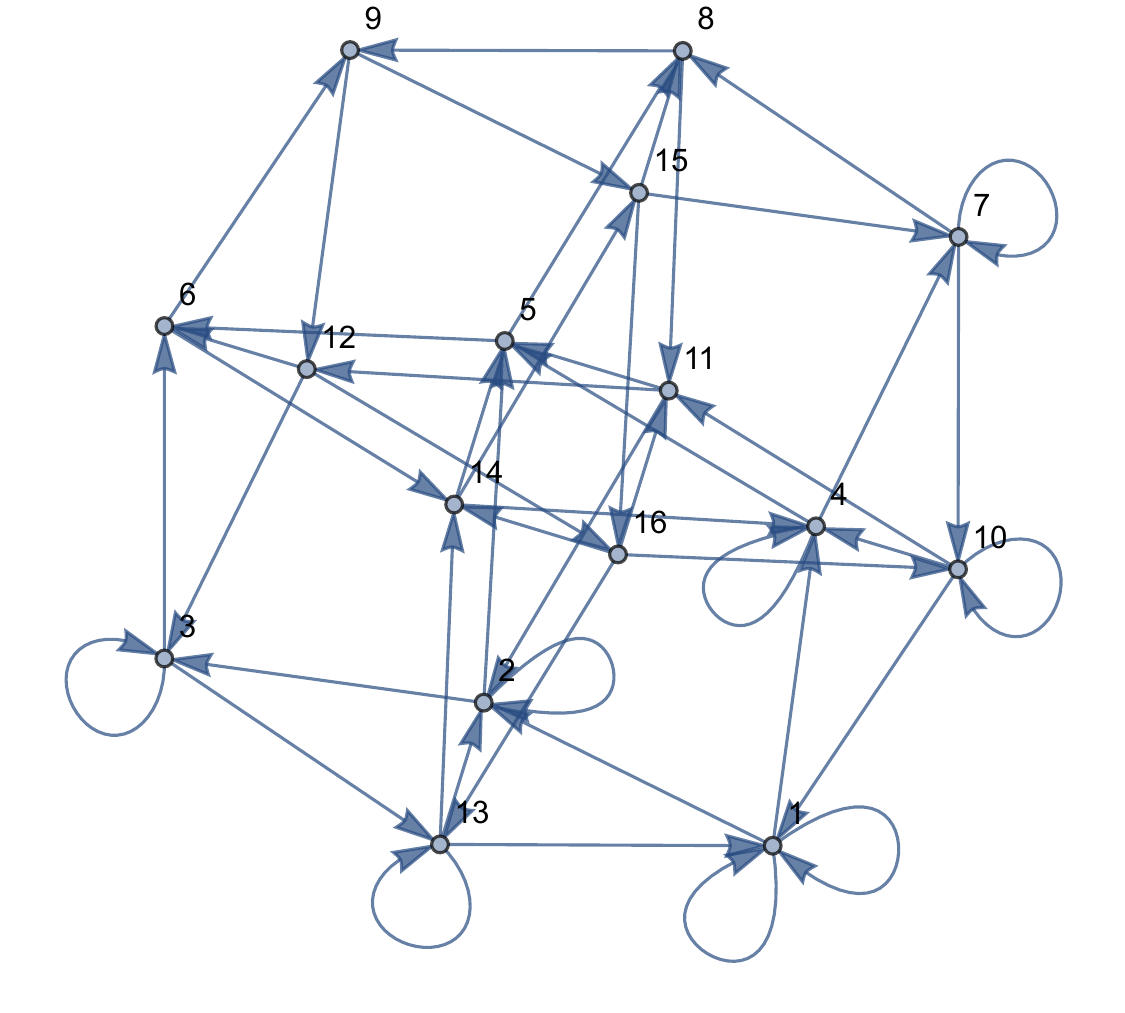}
}
\centerline{
\parbox{0.25\textwidth}{\centering (a)}
\hspace{0.05\textwidth}
\parbox{0.25\textwidth}{\centering (b)}
\hspace{0.05\textwidth}
\parbox{0.28\textwidth}{\centering (e)}
}
\vspace{0.2cm} 

\centerline{
\includegraphics[width=0.9\textwidth]{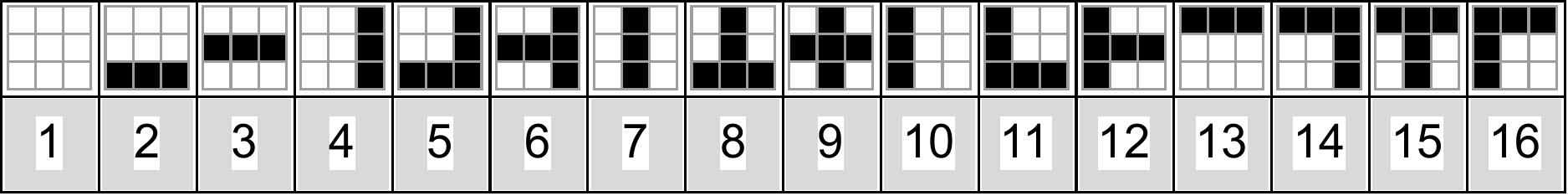}
}
\centerline{
\parbox{0.9\textwidth}{\centering (c)}
}
\vspace{0.2cm} 
\centerline{
\includegraphics[width=0.9\textwidth]{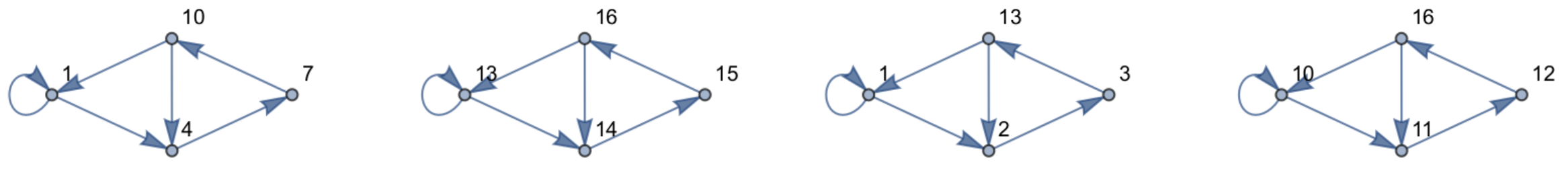}
}
\centerline{
\parbox{0.9\textwidth}{\centering (d)}
}

\caption{
\textbf{(a)} Directed grid graph $\mathcal{G}_{\mathcal{V}}$ with an arbitrary orientation. \textbf{(b)} Intersection supports $[i]_{\mc{V}}$ and $[j]_{\mc{V}}$ of patches centered at an horizontal edge. \textbf{(c)} A small patch dictionary $\mathcal{D}$ with 16 binary labeled patch templates. \textbf{(d)} Few patch template adjacency relations based on \eqref{eq:om-patch-binary}. The first two examples show horizontal adjacency relations and the last two vertical ones. For example, regarding the first graph, $7\to 10$ means moving from patch template 7 one vertex to the right (or equivalently, shifting patch template 7 one vertex to the left) may result in patch template 10, whereas shifting patch template 10 to the left may result in either patch template 1 or patch template 4. A loop at vertex $d$ indicates that patch template $d$ is \textit{self-adjacent}. \textbf{(e)} The entire resulting patch dictionary graph $\mathcal{G}_{\mathcal{D}}$ \eqref{eq:def-G-mcD} comprising both horizontal and vertical patch template adjacencies.}

\label{fig:patch-adjacency}
\end{figure}


\subsection{Assignment Manifold, Assignment Flows.}
We introduce few definitions from information geometry required in connection with assignment flows \cite{Amari:2000}, \cite{Astrom:2017}, \cite[Chapter 2]{Ay:2017}. Given a set $\mc C$ of category labels, we denote by $\mc S_{c}=\{w \in \R_{>0}: \langle \eins_{c}, w \rangle = 1\}$ the relative interior of the probability simplex over $c = |\mc{C}|$ labels. This space forms a smooth manifold with trivial tangent bundle $T \mc S_c = \mc S_c \times T_0$ and tangent space at any $w\in\mc{S}_{c}$ given by $T_0 := T_w \mc S_c = \{v \in \R^{c}: \langle \eins_{c} , v \rangle = 0 \} $. $(\mc{S}_{c},g^{\mc S_c})$ becomes a Riemannian manifold when equipped with the Fisher-Rao metric
\begin{align}\label{eq:def-FR-metric}
    g^{\mathcal{S}_c}_w(u,v) = \sum_{\ell\in[c]} \frac{u_\ell v_\ell}{w_\ell} = \langle u, \operatorname{Diag}(w)^{-1}v \rangle,\qquad w\in \mc S_c, \ u,v \in T_0.
\end{align}
The inverse metric tensor on $\mc S_c$, expressed in ambient coordinates and referred to as the \textit{replicator operator}, is denoted as $R_w$ and has the form
\begin{align}\label{eq:replicator-mat}
    R_w: \R^c \to T_0, \qquad x\mapsto  w\circ x-\langle x,w\rangle w,
\end{align}
where $w\circ x:=\Diag(w) x$ denotes the entrywise Hadamard (or Schur) product.

Given a graph $\mc G_{\mc V} = ( \mc V, \mc E_{\mc V}) $, with $\mc V =[n]$, 
the \textit{assignment manifold} is the product manifold
\begin{align}\label{eq:assignment_mfd} 
    \mc{S}_{c}^{n} := \mathcal{S}_c\times \dots \times \mathcal{S}_c \qquad (n=|\mc{V}|\;\text{factors}),
\end{align}
which becomes a Riemannian manifold when equipped with the natural extension of \eqref{eq:def-FR-metric} to the Fisher-Rao \textit{product} metric. 
Each point $W = (W_{1},\dotsc,W_{n})^{\T}$ on this manifold represents discrete probability vectors $W_{i}\in\mc{S}_{c},\; i\in\mc{V}$, called \textit{assignment vectors}, as variables to be determined for possible assignments of labels $\mc{C}=[c]$ to vertex $i$.

Assignment flows \cite{Astrom:2017} constitute an approach for determining a labeling function \eqref{eq:def-ell-map}, based on the evolution of label assignment vectors as integral curves of vector fields on the assignment manifold, governed by a coupled system of replicator equations 
\begin{subequations}\label{eq:assignment_flow}
\begin{align} 
    \dot W :=  (\dot W_1, \dots , \dot W_n)^{\T} 
    &= (R_{W_1}[F(W)_{1}], \dots ,  R_{W_n}[F(W)_{n}])^{\T} 
    \\
    &=: R_{W}[F(W)].
\end{align}
\end{subequations}
The function $F:\mc{S}_{c}^{n} \to \R^{n\times c}$ parametrizing the flow is often referred to as \textit{fitness} (or \textit{payoff}) \textit{function} in evolutionary game theory and population dynamics, where analyses of individual (uncoupled) replicator equations have provided foundational insights into evolutionary stability and strategic dynamical games \cite{Hofbauer:1998,Sandholm:2010}.
For a more detailed exposition of assignment flows, we refer to \cite{Schnorr:2020}.

For a smooth objective function $J:\mc{S}_{c}^{n} \to \R$, the \textit{Riemannian gradient flow} emanating from some initial point $W_0 \in \mc{S}_{c}^{n}$ is given by
\begin{align}\label{eq:AF-J}
    \dot W = \ggrad(J(W)) = R_{W}[\partial J(W)], \quad W(0) = W_0,
\end{align}
which is a particular case of the general assignment flow equation \eqref{eq:assignment_flow}.

Geometric integration of such flows can be performed numerically in a stable way using any of the methods worked out by \cite{Zeilmann:2020}. Under suitable conditions on the fitness function \cite{Zern:2022}, these schemes guarantee convergence $\lim_{t\to\infty} W_{i}(t) =: e_{\ell_{\mc{V}}(i)},\; i\in\mc{V}$ to some labeling function \eqref{eq:def-ell-map}, and they are amenable to learning parameters of the affinity function $F(W)$ from data \cite{Huhnerbein:2021,Zeilmann:2023,Boll:2024,Cassel:2024}.

\section{Patch Assignment Flows (P-AFs)}\label{sec:pafs}

This section introduces \textit{patch assignment flows}  (short: P-AFs) (Section \ref{sec:approach1}). We verify that P-AFs do not depend on the orientation of the underlying graph (Section \ref{sec:approach2}) and show that any P-AF can be characterized as stationary point of a specific action functional (Section \ref{sec:approach3}).

\vspace{-0.25cm}
\subsection{Flow Definition}\label{sec:approach1}

Consider the scenario described in Section \ref{sec:preliminaries} with an underlying graph $\mc G_{\mc V} = ( \mc V, \mc E_{\mc V}) $ and a labeled patch dictionary $\mc{D}$ with corresponding patch dictionary graph $\mc G_{\mc D} = (\mc D, \mc E_{\mc{D}})$. 

Similar to the definition of the assignment manifold in \eqref{eq:assignment_mfd}, we define the \textit{patch assignment manifold} as the product manifold
\begin{align}
    \mc{S}^{n}_{|\mc D|} := \mc S_{|\mc D|}  \times \cdots \times \mc S_{|\mc D|}  \quad (n=|\mc{V}| \;\text{factors}),
\end{align}
where $\mc S_{|\mc D|} := \{p \in \R_{>0}^{|\mc D|}: \langle \eins_{|\mc D|}, p \rangle =1 \}$ denotes the relative interior of the probability simplex with discrete measures over $|\mc D|$ categories. The pair $( \mc{S}^{n}_{|\mc D|}, g^{ \mc{S}^{n}_{|\mc D|}})$ is a Riemannian manifold with Fisher-Rao product metric $g^{ \mc{S}^{n}_{|\mc D|}}$. 

The \textbf{objective} is to define a Riemannian gradient flow equation in terms of a \textit{patch assignment vector field} $P=(P_{1},\dotsc,P_{n})^{\T}\in\mc{S}^{n}_{|\mc{D}|}$, analogous to \eqref{eq:AF-J}, whose limit $\lim_{t\to\infty} P(t)$ encodes a labeling function $\ell_{\mc{V}}\colon\mc{V}\to\mc{C}$ that assigns to each vertex $i\in\mc{V}$ \textit{both}
\begin{enumerate}[(i)]
\item a labeled patch template $d_{[i]}$ centered at vertex $i\in\mc{V}$, determined as $d=\argmax_{d\in|\mc{D}|}P_{i d}(\infty)\in\mc{D}$ from the dictionary $\mc{D}$,  and
\item a class label $d_{[i]}(k)\in\mc{C}$ corresponding to the center value at the position $k\in[i]_{\mc{V}}$ of the assigned patch template $d_{[i]}$.
\end{enumerate}
Thus, the P-AF to be devised is supposed to determine a labeling of all vertices, just as the basic assignment flow \eqref{eq:AF-J}, yet with a novel way for regularizing label assignments that is entirely induced by the interaction of \textit{labeled} patch templates with overlapping supports across the graph. As a consequence, regularization is \textit{completely} represented by the labeled patch dictionary $\mc{D}$ and the patch dictionary graph $\mc{G}_{\mc{D}}$ \eqref{eq:def-G-mcD}.

Recall the definitions of the adjaceny matrices \eqref{eq:def-A-GV} and \eqref{eq:def-Om-G-D} corresponding to $\mc{G}_{\mc{V}}$ and $\mc{G}_{\mc{D}}$. 
We consider the optimization problem
\begin{equation}\label{eq:def-J-P-AF}
\sup_{P\in\mc{S}^{n}_{|\mc{D}|}} J(P),\qquad
J(P) := \big\la P, A^{h}_{\mc{V}} P (\Omega^{h}_{\mc{D}})^{\T} + A^{v}_{\mc{V}} P (\Omega^{v}_{\mc{D}})^{\T}\big\ra,
\end{equation}
where $A^{h}_{\mc{V}}+A^{v}_{\mc{V}}=A_{\mc{V}}$, due to \eqref{eq:def-A-GV} and \eqref{eq:edgeset-v-h}.
\begin{lemma}[\textbf{maximizing patch consistency}]
Solving \eqref{eq:def-J-P-AF} is equivalent to maximizing
\begin{equation}\label{eq:J-P-explicit}
J(P) = \sum_{ij\in\mc{E}^{h}_{\mc{V}}}\la P_{i}, \Omega^{h}_{\mc{D}} P_{j}\ra + \sum_{ij\in\mc{E}^{v}_{\mc{V}}}\la P_{i}, \Omega^{v}_{\mc{D}} P_{j}\ra.
\end{equation}
\end{lemma}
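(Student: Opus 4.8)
The plan is to prove the equality of the two functionals by expanding the canonical matrix inner product in \eqref{eq:def-J-P-AF} entrywise and regrouping the resulting sums by edges; equivalence of the two optimization problems is then immediate, since maximizing a function over $\mc{S}^{n}_{|\mc{D}|}$ does not change when the function is rewritten in a different but identically-valued form. Writing $\la X,Y\ra=\sum_{i,d}X_{i,d}Y_{i,d}$ for matrices and using the row-vector notation $P_{i}$ for the $i$-th row of $P\in\R^{n\times|\mc{D}|}$, I would first record the entrywise form of the inner product of the horizontal term,
\[
\big\la P, A^{h}_{\mc{V}} P (\Omega^{h}_{\mc{D}})^{\T}\big\ra
=\sum_{i,d} P_{i,d}\sum_{j\in\mc{V}}\sum_{d'\in\mc{D}} A^{h}_{\mc{V};i,j}\,P_{j,d'}\,\Omega^{h}_{\mc{D};d,d'},
\]
where the transpose has turned $(\Omega^{h}_{\mc{D}})^{\T}_{d',d}$ into $\Omega^{h}_{\mc{D};d,d'}$.

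Next I would swap the order of summation and observe that the inner double sum over $d,d'$ is exactly $\la P_{i},\Omega^{h}_{\mc{D}}P_{j}\ra=P_{i}^{\T}\Omega^{h}_{\mc{D}}P_{j}$, so that
\[
\big\la P, A^{h}_{\mc{V}} P (\Omega^{h}_{\mc{D}})^{\T}\big\ra
=\sum_{i,j\in\mc{V}} A^{h}_{\mc{V};i,j}\,\la P_{i},\Omega^{h}_{\mc{D}}P_{j}\ra
=\sum_{ij\in\mc{E}^{h}_{\mc{V}}}\la P_{i},\Omega^{h}_{\mc{D}}P_{j}\ra,
\]
the last step using the definition \eqref{eq:def-A-GV} of the adjacency matrix restricted to the horizontal edge set $\mc{E}^{h}_{\mc{V}}$: $A^{h}_{\mc{V};i,j}$ is $1$ when $ij\in\mc{E}^{h}_{\mc{V}}$ and $0$ otherwise, so the double sum over $\mc{V}\times\mc{V}$ collapses to a sum over edges. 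The identical computation with $h$ replaced by $v$ handles the vertical term, and adding the two yields \eqref{eq:J-P-explicit}.

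There is no substantive obstacle here — the lemma is a bookkeeping reformulation — but the one place to be careful is precisely the transpose: because $\mc{G}_{\mc{V}}$ is oriented, $A^{h}_{\mc{V}}$, $A^{v}_{\mc{V}}$, $\Omega^{h}_{\mc{D}}$ and $\Omega^{v}_{\mc{D}}$ are in general asymmetric, so one must keep the factors $(\Omega^{h}_{\mc{D}})^{\T}$ and $(\Omega^{v}_{\mc{D}})^{\T}$ in \eqref{eq:def-J-P-AF} and avoid any symmetrization, as it is exactly these transposes that put the indices of $\Omega^{h}_{\mc{D};d,d'}$ (and $\Omega^{v}_{\mc{D};d,d'}$) in the order appearing in \eqref{eq:J-P-explicit}, matching the convention that the first argument of $\la P_{i},\Omega^{h}_{\mc{D}}P_{j}\ra$ is the row index $i$ and the second the column index $j$ of the underlying edge $ij\in\mc{E}_{\mc{V}}$. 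Once the two functionals are shown to coincide, the supremum problem \eqref{eq:def-J-P-AF} and the maximization of \eqref{eq:J-P-explicit} are literally the same problem, which completes the proof.
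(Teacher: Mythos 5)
Your proof is correct and follows essentially the same route as the paper's: expand the canonical matrix inner product, use the adjacency matrix entries to collapse the double sum over $\mc{V}\times\mc{V}$ to a sum over edges, and identify the inner sum over $d,d'$ as $\la P_{i},\Omega^{h}_{\mc{D}}P_{j}\ra$ (the paper does this slightly more compactly via the row identity $(P(\Omega^{h}_{\mc{D}})^{\T})_{j}^{\T}=\Omega^{h}_{\mc{D}}P_{j}^{\T}$, whereas you write everything out entrywise). Your remark about keeping track of the transpose for the asymmetric matrices is apt and consistent with the paper's convention.
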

\begin{proof}
\eqref{eq:J-P-explicit} results from \eqref{eq:def-J-P-AF} by taking into account the adjacency relation of $\mc{G}_{\mc{V}}$ given by $A^{h}_{\mc{V}}+A^{v}_{\mc{V}}=A_{\mc{V}}$, due to \eqref{eq:def-A-GV} and \eqref{eq:edgeset-v-h}. The inner product with the first term on the right hand side of \eqref{eq:def-J-P-AF} reads
\begin{equation}
\sum_{ij\in\mc{E}^{h}_{\mc{V}}} \la P_{i},(P (\Omega^{h}_{\mc{D}})^{\T})_{j}^{\T}\ra
=\sum_{ij\in\mc{E}^{h}_{\mc{V}}} \la P_{i},\Omega^{h}_{\mc{D}} P_{j}\ra,
\end{equation} 
and likewise for the second term. $\quad\Box$
\end{proof}
Using the row-stacking vectorization map $\vvec_r(ABC) = (A\otimes C^\T) \vvec_r(B)$ \cite{Loan:2000}, we define the vectorized objective function 
\begin{equation}
J^{\mfk{v}}(p^\mfk{v}) := J(\vvec_r(P)),\qquad
p^{\mfk{v}} := \vvec_{r}(P) 
= (P_{1}^{\T},\dotsc,P_{n}^{\T})^{\T} 
\end{equation}
and obtain
\begin{subequations}
\begin{align}
\partial J^{\mfk{v}}(p^\mfk{v})
&= \big(A^{h}_{\mc{V}}\otimes \Omega^{h}_{\mc{D}} + (A^{h}_{\mc{V}} \otimes \Omega^{h}_{\mc{D}})^{\T}
\\ &\qquad\qquad
+ A^{v}_{\mc{V}}\otimes \Omega^{v}_{\mc{D}} + (A^{v}_{\mc{V}} \otimes \Omega^{v}_{\mc{D}})^{\T}
\big) p^\mfk{v}
\\
\gdw\qquad 
\partial J(P) &= \mc{A}_{\Omega}(P) + \mc{A}^{\T}_{\Omega}(P),
\\ \label{eq:A-Om-P}
&\quad 
\mc{A}_{\Omega}(P) 
:= A^{h}_{\mc{V}} P (\Omega^{h}_{\mc{D}})^{\T}
+ A^{v}_{\mc{V}} P (\Omega^{v}_{\mc{D}})^{\T},
\\ \label{eq:A-Om-T-P}
&\quad 
\mc{A}^{\T}_{\Omega}(P) 
:= (A^{h}_{\mc{V}})^{\T} P (\Omega^{h}_{\mc{D}})
+ (A^{v}_{\mc{V}})^{\T} P (\Omega^{v}_{\mc{D}}),
\end{align}
\end{subequations}
and analogous to \eqref{eq:AF-J} the \textbf{patch assignment flow (P-AF) equation}
\begin{subequations}\label{eq:def-P-AF}
\begin{align}
\dot P(t) 
&= \ggrad\big(J(P)\big) 
= R_{P(t)}[\partial J(P(t))]
\\
&= R_{P(t)}[\mc{A}_{\Omega}(P) + \mc{A}^{\T}_{\Omega}(P)],\qquad P(0)=:P_{0}\in\mc{S}^{n}_{|\mc{D}|},
\end{align}
\end{subequations}
where $R_{P(t)}[\cdot]$ acts row-wise as in \eqref{eq:assignment_flow}. A canonical choice of the initial point $P_{0}$ is described in Section \ref{sec:Implementation-Details}.

\vspace{-0.3cm}
\subsection{Independence of Graph Orientation}\label{sec:approach2}

We show that the orientation of the graph $\mc{G}_{\mc{V}}$ (cf.~Figure \ref{fig:patch-adjacency}(a)) can be chosen arbitrarily.
\begin{proposition}
The patch assignment flow solving \eqref{eq:def-P-AF} does not depend on the orientation of the underlying graph $\mc{G}_{\mc{V}}$.
\end{proposition}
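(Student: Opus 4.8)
The plan is to show that reversing the orientation of any edge leaves the flow equation \eqref{eq:def-P-AF} unchanged, since every orientation can be reached from any other by a sequence of single-edge reversals. The key observation is that the right-hand side of \eqref{eq:def-P-AF} depends on the orientation only through the combination $\mc{A}_{\Omega}(P) + \mc{A}^{\T}_{\Omega}(P)$, i.e.\ through the symmetrized interaction, so I would argue that this combination is orientation-independent even though $\mc{A}_{\Omega}$ alone is not.

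Concretely, first I would recall from \eqref{eq:J-P-explicit} that $J(P) = \sum_{ij\in\mc{E}^{h}_{\mc{V}}}\la P_{i}, \Omega^{h}_{\mc{D}} P_{j}\ra + \sum_{ij\in\mc{E}^{v}_{\mc{V}}}\la P_{i}, \Omega^{v}_{\mc{D}} P_{j}\ra$, and note the crucial compatibility property of the similarity function: since the weights $\w_{d_{[i]}d'_{[j]}}$ in \eqref{eq:om-patch-weight} (and \eqref{eq:om-patch-binary}) depend only on the positions of $i$ and $j$ relative to each other and agreement of patch values on the overlap is a symmetric relation, reversing an edge $ij \rightsquigarrow ji$ swaps the roles of $d$ and $d'$, which means $\Omega^{h}_{\mc{D};d',d} = \w_{d'_{[j]}d_{[i]}} = \w_{d_{[i]}d'_{[j]}} = \Omega^{h}_{\mc{D};d,d'}$ when $ij$ and $ji$ are the two orientations of the same horizontal edge. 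In other words, the patch template adjacency matrix associated with the reversed edge is the transpose of the original one: $\Omega^{h}_{\mc{D}} \mapsto (\Omega^{h}_{\mc{D}})^{\T}$, and likewise in the vertical case. Hence in the sum \eqref{eq:J-P-explicit}, reversing edge $ij$ replaces the summand $\la P_{i}, \Omega^{h}_{\mc{D}} P_{j}\ra$ by $\la P_{j}, (\Omega^{h}_{\mc{D}})^{\T} P_{i}\ra = \la P_{i}, \Omega^{h}_{\mc{D}} P_{j}\ra$, so $J(P)$ is literally unchanged for every $P$.

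Since the objective $J$ is invariant under reorientation, so is its Euclidean gradient $\partial J(P)$, and therefore so is the Riemannian gradient $\ggrad J(P) = R_{P}[\partial J(P)]$ and the flow it generates; by uniqueness of solutions to the ODE \eqref{eq:def-P-AF} with fixed initial point $P_{0}$, the integral curve $P(t)$ is the same. Alternatively — and this is worth spelling out to match the decomposition \eqref{eq:A-Om-P}--\eqref{eq:A-Om-T-P} — one can argue directly at the level of the operator: reversing a horizontal edge sends $A^{h}_{\mc{V}} \mapsto A^{h}_{\mc{V}} - e_i e_j^{\T} + e_j e_i^{\T}$ and $\Omega^{h}_{\mc{D}} \mapsto$ (the matrix agreeing with $\Omega^{h}_{\mc{D}}$ except with $d,d'$ roles transposed on that edge), and one checks that $\mc{A}_{\Omega}(P) + \mc{A}^{\T}_{\Omega}(P)$ picks up a contribution $(e_j e_i^{\T} - e_i e_j^{\T})P(\Omega^{h}_{\mc{D}})^{\T}$ from the first sum that is exactly cancelled by the matching change $(e_i e_j^{\T} - e_j e_i^{\T})P(\Omega^{h}_{\mc{D}})$ in the transposed sum, using $\Omega^{h}_{\mc{D};d,d'} = \Omega^{h}_{\mc{D};d',d}$ read off the reversed edge. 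Either route is short.

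The main obstacle — really the only point requiring care — is making precise and justifying the claim that reversing an edge transposes the corresponding $\Omega_{\mc{D}}$-matrix; this rests entirely on the translation-invariance and symmetry of the similarity function noted right after \eqref{eq:om-patch-binary}, namely that $\w_{d_{[i]}d'_{[j]}}$ depends only on the relative displacement $j - i$ and that agreement on the overlap $[i]_{\sst{\mc{V}}}\cap[j]_{\sst{\mc{V}}}$ is symmetric in the two patches. I would state this as the single lemma-like fact $\Omega^{h}_{\mc{D}}(\text{orientation } i\to j) = \Omega^{h}_{\mc{D}}(\text{orientation } j\to i)^{\T}$ (and similarly for $v$), verify it from \eqref{eq:om-patch-weight}, and then the rest of the proof is the two-line invariance-of-$J$ argument above followed by invoking \eqref{eq:def-P-AF} and ODE uniqueness.
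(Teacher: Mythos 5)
Your main argument is correct and is essentially the paper's proof: both rest on the key fact that reversing an edge replaces $\Omega^{h}_{\mc{D}}$ (resp.\ $\Omega^{v}_{\mc{D}}$) by its transpose, so that $J(P)$ in the form \eqref{eq:J-P-explicit} is unchanged and hence so are $\partial J$, $\ggrad J = R_{P}[\partial J]$, and the integral curve; the paper merely does this for the wholesale transposition of the (horizontal and/or vertical) edge set, writing $\wt{J}(P)=\la P,\mc{A}^{\T}_{\Omega}(P)\ra = J(P)$, whereas you work edge by edge. One caveat: your ``alternative'' operator-level route invokes $\Omega^{h}_{\mc{D};d,d'}=\Omega^{h}_{\mc{D};d',d}$, i.e.\ symmetry of $\Omega^{h}_{\mc{D}}$ itself, which is false in general (the paper explicitly calls these matrices asymmetric); the correct cancellation needs the transposition identity $\wt{\Omega}^{h}_{\mc{D}}=(\Omega^{h}_{\mc{D}})^{\T}$ relating the matrices for the two orientations, exactly as in your primary route, so you should drop or repair that aside.
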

\begin{proof}[sketch]
Let $\wt{\mc G}_{\mc V} = (\mc V , \wt{\mc E}_{\mc V})$ denote the transpose of $\mc G_{\mc V}$, that is the graph with the same vertex set $\mc{V}$ but with reversed edge orientations: 
    $ij\in \mc {E}_{\mc V} \;\Leftrightarrow\; ji\in \wt {\mc E}_{\mc V},\;\text{for all } i,j\in \mc V$. 
Since the orientation affects the patch template adjacency matrix by \eqref{eq:def-Om-G-D}, the resulting patch adjacency graph $\wt{\mc{G}}_{\mc{D}}$ becomes transposed as well. As a result, following Section \ref{sec:approach1}, the corresponding objective then reads $\wt{J}(P) = \la P, \mc{A}^{\T}_{\Omega}(P)\ra$ and one easily verifies
$\ggrad\wt{J}(P) = \ggrad J(P)$.

Due to the disjoint decomposition \eqref{eq:edgeset-v-h} and the corresponding decomposition of the objective function \eqref{eq:def-J-P-AF}, this equation also holds if either the horizontal or the vertical edge orientations only are reversed. $\quad\Box$
\end{proof}
\begin{figure}[t]
\centerline{
\includegraphics[height=0.2\textwidth]{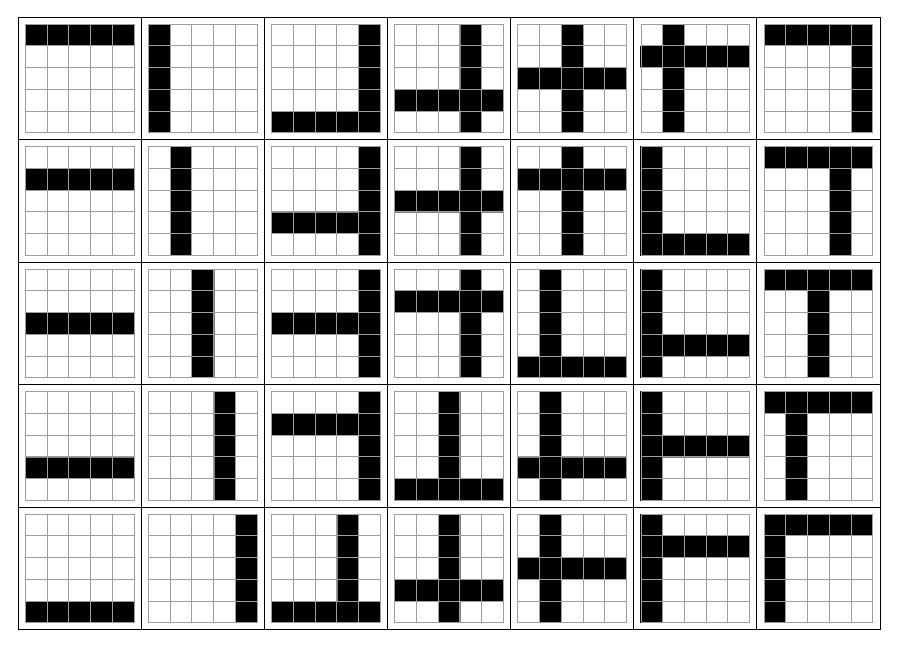}\hfill
\includegraphics[height=0.2\textwidth]{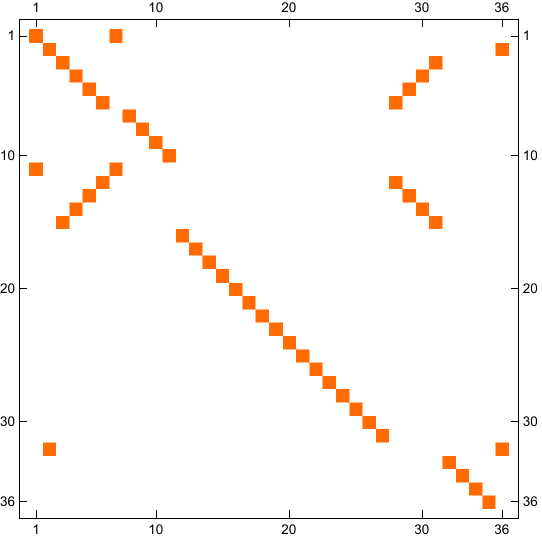}\hfill
\includegraphics[height=0.2\textwidth]{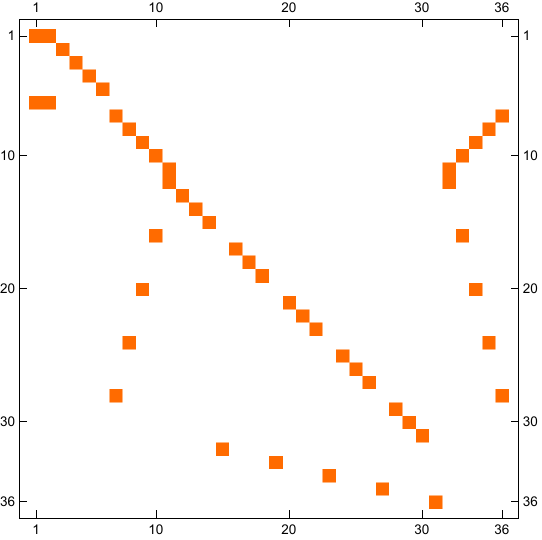}\hfill
\includegraphics[height=0.25\textwidth]{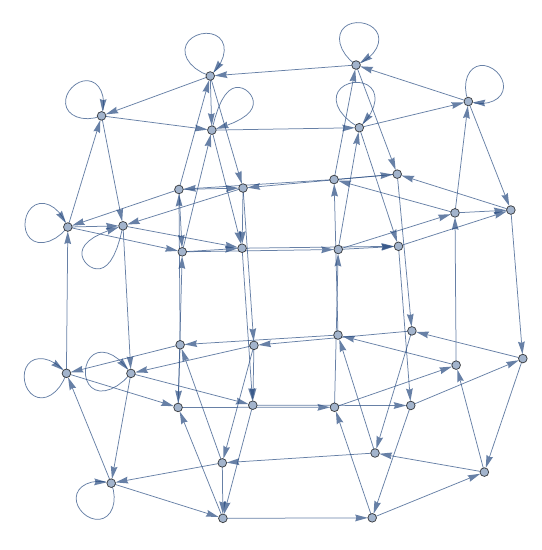}
}
\centerline{
\parbox{0.2\textwidth}{\centering (a)}\hfill
\parbox{0.2\textwidth}{\centering (b)}\hfill
\parbox{0.2\textwidth}{\centering (c)}\hfill
\parbox{0.2\textwidth}{\centering (d)}
}
\caption{
\textbf{(a)} A small dictionary of $5\times 5$ binary template patches, complemented by the constant background patch, which slightly extends the scenario of Figure \ref{fig:patch-adjacency}, yet still models a `small world of binary crossing structure'. \textbf{(b), (c)} Patch template adjacency matrices \eqref{eq:def-Om-G-D} for horizontal and vertical edges, respectively. \textbf{(d)} The resulting dictionary dictionary graph is highly symmetric. The 10 line patches (two leftmost columns in (a)) and the constant patch (not shown in (a)) are self-adjacent and correspond to loops in (d).
}
\label{fig:dictionary-lines-5x5}
\end{figure}

\subsection{Action Functional}\label{sec:approach3}

In \cite{Savarino:2024aa} it has been shown that the assignment flow determined by \eqref{eq:assignment_flow} can be characterized as stationary point of a Lagrangian action functional, provided the affinity function $F$ satisfies a corresponding assumption. It turns out that this result applies to the patch assignment flow \eqref{eq:def-P-AF} which essentially has the same mathematical structure.
\begin{proposition}[\textbf{P-AF: action functional, stationary point}]
The solution $P(t)$ of the patch assignment flow equation \eqref{eq:def-P-AF} is a critical point of the action functional
\begin{equation}
\mc{L}(P) = \frac{1}{2}\int_{t_{0}}^{t_{1}}\Big(\|\dot P(t)\|_{g}^{2} + \sum_{i\in\mc{V}}\Var_{P_{i}(t)}\big[\big(\mc{A}_{\Omega}(P) + \mc{A}^{\T}_{\Omega}(P)\big)_{i}\big] dt,
\end{equation}
where $\|\cdot\|_{g}$ denotes the Fisher-Rao product metric of the patch assignment manifold $\mc{S}^{n}_{|\mc{D}|}$.
\end{proposition}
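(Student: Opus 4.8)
The plan is to show that the patch assignment flow equation \eqref{eq:def-P-AF} is exactly the Euler--Lagrange equation of $\mc{L}$, by adapting the argument of \cite{Savarino:2024aa} to the present setting. The key structural observation is that, writing $G(P) := \mc{A}_{\Omega}(P) + \mc{A}^{\T}_{\Omega}(P) = \partial J(P)$, the flow \eqref{eq:def-P-AF} reads $\dot P = R_{P}[G(P)]$, and the Lagrangian has the form $L(P,\dot P) = \tfrac12\|\dot P\|_g^2 + \tfrac12\sum_{i}\Var_{P_i}[G(P)_i]$, i.e.\ kinetic energy minus a potential $U(P) := -\tfrac12\sum_i \Var_{P_i}[G(P)_i]$. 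So this is a classical mechanical action on the Riemannian manifold $(\mc{S}^n_{|\mc D|}, g)$, and I would compute its first variation using the standard formula: stationarity is equivalent to the second-order equation $\tfrac{D}{dt}\dot P = -\ggrad U(P)$, where $\tfrac{D}{dt}$ is the Levi-Civita covariant derivative of the Fisher--Rao product metric.

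The main work is then twofold. First, differentiate the first-order flow: from $\dot P = R_P[G(P)]$, applying $\tfrac{D}{dt}$ and using that $R_P$ is the inverse metric tensor (the ``sharp'' map), one expresses $\tfrac{D}{dt}\dot P$ in terms of $\ggrad(\la G(P),\dot P\ra$-type quantities) plus a Christoffel-symbol contribution coming from the Fisher--Rao connection; for the simplex with the $e$-connection/Fisher--Rao geometry these terms are explicit. Second, compute $\ggrad U(P)$: the variance functional on the simplex, $\Var_{P_i}[x] = \la P_i, x\circ x\ra - \la P_i, x\ra^2$ for fixed $x$, has a clean Riemannian gradient, and here additionally $x = G(P)_i$ depends on $P$ through the bilinear forms $\mc{A}_\Omega, \mc{A}^\T_\Omega$, so the chain rule contributes the term that, upon using $G = \partial J$ and the symmetry $\mc{A}_\Omega^\T$ built into \eqref{eq:A-Om-P}--\eqref{eq:A-Om-T-P}, reassembles into $R_P$ acting on the derivative of $G$ along $\dot P$. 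The identity to verify is that these two computations agree, i.e.\ that the second-order equation obtained by differentiating $\dot P = R_P[G(P)]$ coincides with $\tfrac{D}{dt}\dot P = -\ggrad U(P)$. Because $G(P) = \partial J(P)$ is an honest (Euclidean) gradient, this is precisely the situation covered by the assumption in \cite{Savarino:2024aa}, so I would phrase the proof as: verify that $F := G$ here satisfies the hypothesis of that reference (namely that the fitness field is a Euclidean gradient, which holds since $G = \partial J$), and then invoke their theorem verbatim, supplying the short check that the resulting potential term in their action specializes to $\tfrac12\sum_i \Var_{P_i}[G(P)_i]$.

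The step I expect to be the main obstacle is the bookkeeping in differentiating $\dot P = R_P[G(P)]$ and matching it against $\ggrad U$: one must handle (a) the dependence of $R_P$ on $P$ (the Christoffel symbols of Fisher--Rao), (b) the variance/potential gradient, and (c) the $P$-dependence of $G$ via the convolution-type operators $\mc{A}_\Omega,\mc{A}^\T_\Omega$, and check that the cross terms cancel so that exactly $\la G(P)_i, \dot P_i\ra$-replicator terms survive. The symmetrization $\partial J(P) = \mc{A}_\Omega(P) + \mc{A}_\Omega^\T(P)$ is what makes $G$ a gradient and is therefore essential: without it $\la P, \mc{A}_\Omega(P)\ra$ would not have $\mc{A}_\Omega + \mc{A}_\Omega^\T$ as its Euclidean derivative and the variational characterization would fail. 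I would remark that all of this is purely formal manipulation once the result of \cite{Savarino:2024aa} is in hand, since the patch assignment flow ``essentially has the same mathematical structure'' as the assignment flow treated there --- the only change is replacing the label simplex $\mc{S}_c$ by the patch simplex $\mc{S}_{|\mc D|}$ and the affinity by $G = \partial J$, neither of which affects the derivation.
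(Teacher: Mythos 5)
Your proposal is correct and ultimately takes the same route as the paper: verify that the affinity $F(P)=\mc{A}_{\Omega}(P)+\mc{A}_{\Omega}^{\T}(P)=\partial J(P)$ satisfies the hypothesis of \cite[Theorem 3.3]{Savarino:2024aa} and invoke that theorem, the condition you phrase as ``$F$ is a Euclidean gradient'' being equivalent to the paper's phrasing that $dF$ is self-adjoint (here immediate since $F$ is linear with symmetric matrix). The lengthy sketch of re-deriving the Euler--Lagrange computation is just the content of the cited theorem and is not needed once the hypothesis is checked.
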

\begin{proof}
Inspecting \cite[Theorem 3.3]{Savarino:2024aa} shows that a sufficient condition is that the differential $dF(W)$ of the affinity function of \eqref{eq:assignment_flow} is self-adjoint. In the present case, the affinity function reads $F(P) = \mc{A}_{\Omega}(P) + \mc{A}^{\T}_{\Omega}(P)$. The explicit form of the right-hand side given by \eqref{eq:A-Om-P} and \eqref{eq:A-Om-T-P} shows that it is linear in $P$ and that the differential is a symmetric matrix. $\quad\Box$ 
\end{proof}

Critical points of an action functional satisfy the corresponding  Euler-Lagrange equation, which define the equations of motion for a particle in a classical system described by the action. This gives an additional and insightful perspective on the solutions of the P-AF \eqref{eq:def-P-AF}.

\section{Experiments and Discussion}\label{sec:experiments}

%
\begin{figure}[t]
\centerline{
\includegraphics[height=0.15\textwidth]{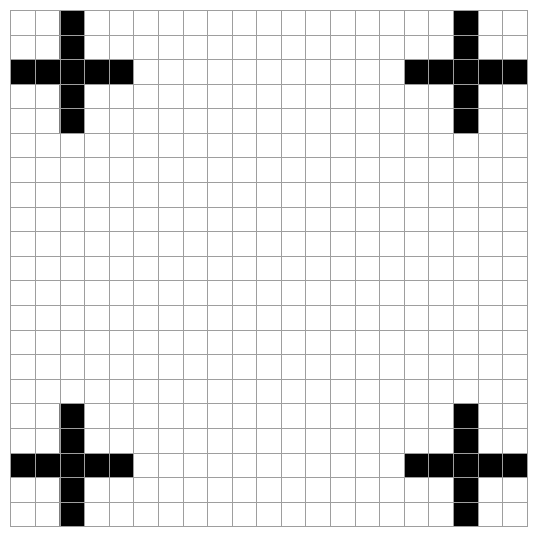}\hspace{0.05\textwidth}
\includegraphics[height=0.15\textwidth]{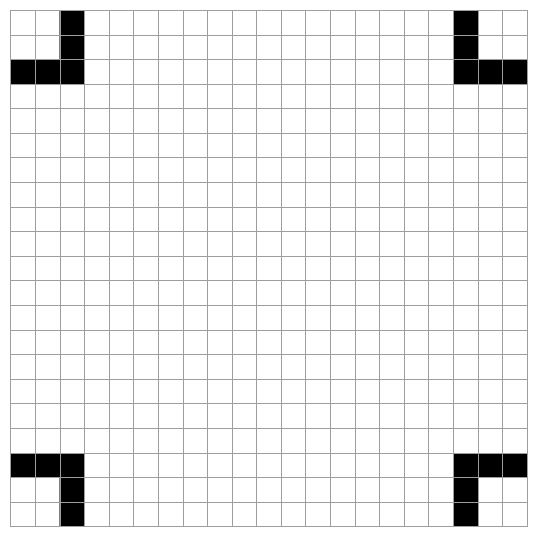}\hspace{0.05\textwidth}
\includegraphics[height=0.15\textwidth]{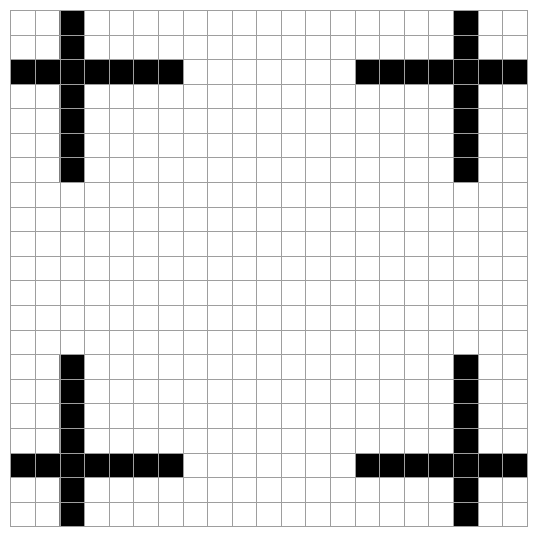}\hspace{0.05\textwidth}
\includegraphics[height=0.15\textwidth]{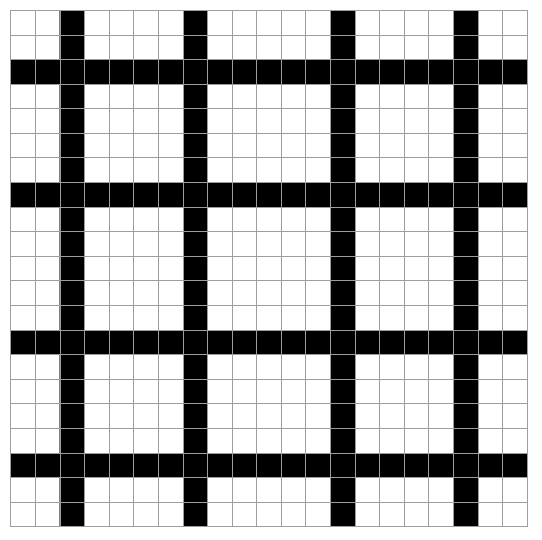}
}
\centerline{
\parbox{0.15\textwidth}{\centering (a)}\hspace{0.05\textwidth}
\parbox{0.15\textwidth}{\centering (b)}\hspace{0.05\textwidth}
\parbox{0.15\textwidth}{\centering (c)}\hspace{0.05\textwidth}
\parbox{0.15\textwidth}{\centering (d)}
}
\vspace{0.05cm}
\centerline{
\hspace{0.22\textwidth}
\includegraphics[height=0.15\textwidth]{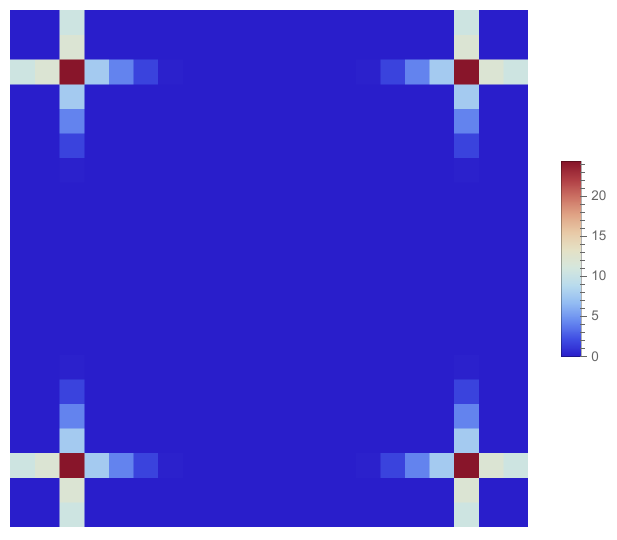}\hspace{0.03\textwidth}
\includegraphics[height=0.15\textwidth]{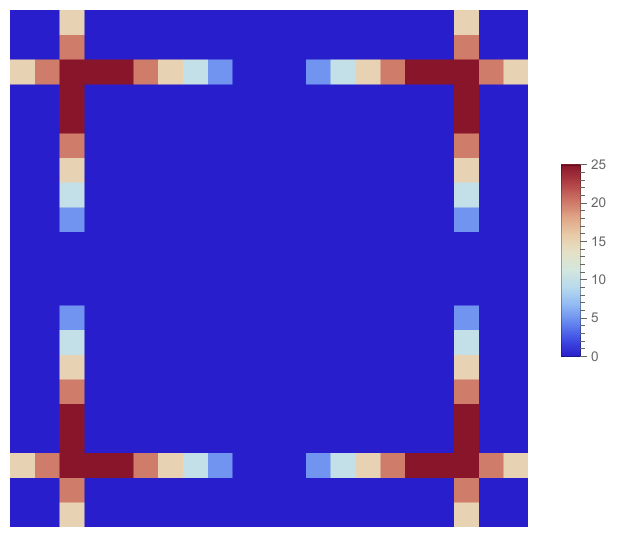}\hspace{0.03\textwidth}
\includegraphics[height=0.15\textwidth]{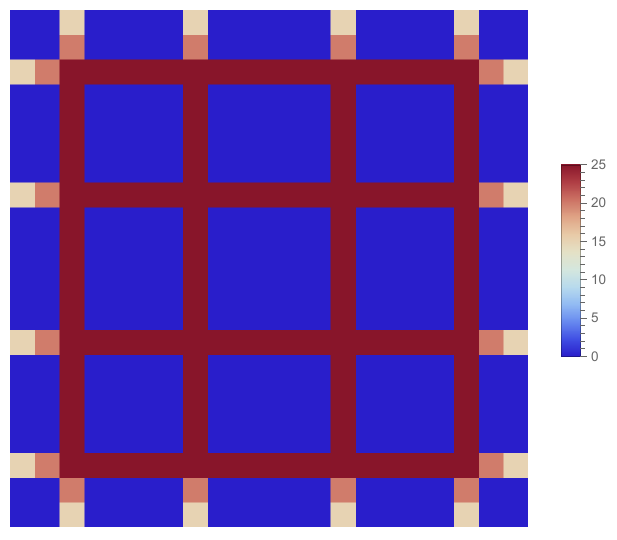}
}
\centerline{
\hspace{0.2\textwidth}
\parbox{0.15\textwidth}{\centering (e)}\hspace{0.05\textwidth}
\parbox{0.15\textwidth}{\centering (f)}\hspace{0.05\textwidth}
\parbox{0.15\textwidth}{\centering (g)}
}
\caption{
\textbf{(a)} Input data, to be regularized by the patch-AF using the dictionary of Fig.~\ref{fig:dictionary-lines-5x5}. \textbf{(b)-(d)} Labelings returned by the patch-AF when background and foreground labels of dictionary patches are equally important (ratio $1.0/1.0$; (b)) or not (ratio $1.0/1.2$ (c); ratio $1.0/1.5$ (d)). \textbf{(e)--(g)} Uncertainty quantification of the labelings above, due to the \textit{symmetry} of the patch dictionary graph and the corresponding \textit{multiplicity} of \textit{locally consistent} labelings (see text). The colors `red' and `blue' signal \textit{unique} fore- and background labelings, respectively, whereas `white' signals \textit{uncertainty} and plausible alternative labelings. This result illustrates that patch assignment flows enable both labeling pattern \textit{suppression} and labeling pattern \textit{formation}.
}
\label{fig:5x5-lines}
\end{figure}

This section provides experimental results in order to validate the novel patch assignment flow (P-AF) approach.
\vspace{-0.15cm}
\begin{enumerate}[(1)]
\item Section \ref{sec:Implementation-Details} specifies implementation details. 
\item Section \ref{sec:Symmetry-UQ} discusses a computer-generated experiment which highlights specific properties of the P-AF: 
\begin{itemize}
\item \textit{Symmetry} in the labeled patch dictionary $\mc{D}$ enables to \textit{quantify uncertainty} of patch assignments and to sample from multiple `best' solutions.
\item Patch-based regularization in connection with \textit{asymmetric} patch similarity functions may lead to labeling pattern \textit{completion}.
\end{itemize}
\item A real-world experiment in Section \ref{sec:Real-World-Exp} exemplifies how the \textit{structure} of the labeled patch dictionary $\mc{D}$ may be adapted to \textit{prior knowledge} about the application.
\end{enumerate}

\begin{figure}
\centerline{
\includegraphics[height=0.2\textwidth]{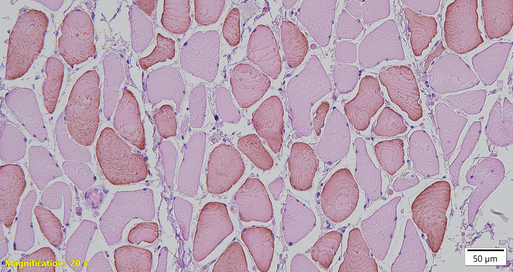}
\includegraphics[height=0.2\textwidth]{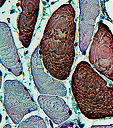}
\includegraphics[height=0.15\textwidth]{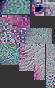}
\includegraphics[height=0.15\textwidth]{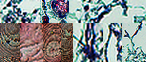}
}
\centerline{
\parbox{0.325\textwidth}{\centering (a)}\hfill
\parbox{0.2\textwidth}{\centering (b)}\hfill
\parbox{0.4\textwidth}{\centering (c)}
}
\caption{
\textbf{(a)} A light microscopy image of cross-sectional skeletal muscle structure with immunohistochemical staining (see text). \textbf{(b)} Raw input data (section) after histogram equalization (cf.~also Figure \ref{fig:MyoST-results}(e)). \textbf{(c)} Few interrogation regions with raw data from the foreground class 1 (left), class 2 and the background class (top and right), used to train a SVM for local labeling $W^{0}$ of the entire image, which defines the initial point of the patch assignment flow by \eqref{eq:def-PO}.
}
\label{eq:MyoST-input}
\end{figure}

\subsection{Implementation Details}\label{sec:Implementation-Details}

\textbf{Patch assignment flow: initialization.} Input data for the P-AF is an \textit{initial} labeling function $\ell_{\mc{V}}^{0}\colon\mc{V}\to\mc{C}$ obtained, e.g., by classifying data \textit{locally} at each vertex $i\in\mc{V}$ using any established method. Encoding these initial \textit{integral} class labels $\ell^{0}_{\mc{V}}(i) \to W^{0}_{i},\; \forall i\in\mc{V}$ by an assignment vector field $W^{0}\in\ol{\mc{S}^{n}_{c}}$ of unit vectors, the initial point of the P-AF in \eqref{eq:def-P-AF} is defined as
\begin{subequations}\label{eq:def-PO}
\begin{align}
P_{0;i,d} 
&:= \frac{\exp(\la W^{\lambda}_{[i]}, d_{[i]}\ra)}{\sum_{d'\in\mc{D}}\exp(\la W^{\lambda}_{[i]}, d'_{[i]}\ra)},\qquad i\in\mc{V},\quad d\in\mc{D}
\label{eq:def-PO-a} \\ \label{eq:def-PO-b}
W_{i}^{\lambda} &:= (1-\lambda) W^{0}_{i} + \lambda \eins_{\mc{S}_{c}},\qquad i\in\mc{V},\quad \lambda \in [0,1],
\end{align}
\end{subequations}
where $W^{\lambda}_{[i]}$ denotes the patch of assignment vectors \eqref{eq:def-PO-b} centered at $i\in\mc{V}$ and $\eins_{\mc{S}_{c}} = \frac{1}{c}\eins_{c}$ denotes the barycenter of $\mc{S}_{c}$ (uniform discrete distribution). 
The parameter $\lambda$ balances the influence of the initial labeling $W^{0}$ and the regularizing effect of the dictionary $\mc{D}$, respectively. It is the \textit{only user parameter} of the patch assignment flow.

\textbf{Label assignment and uncertainty quantification.} In all experiments, the P-AF was integrated using the geometric Euler method with the very small stepsize $h=0.02$ in order to rule out any approximation errors caused by too large stepsizes, and with a sufficiently large time interval $[0,T]$ leading always to convergence of $P(t)$. $P(T)$ then determines the regularized labeling function $\ell_{\mc{V}}$ as described in Section \ref{sec:approach1}, item (ii). In addition, in order to quantify uncertainty of patch assignments caused by \textit{multiple locally consistent} templates from the patch dictionary $\mc{D}$, the \textit{mean patch assignment function} for \textit{binary} labeling problems is defined as
\begin{equation}\label{eq:def-mean-ell}
\ol{\ell_{\mc{V}}}\colon\mc{V}\to[0,1],\qquad
\ol{\ell_{\mc{V}}}(i) 
= \frac{1}{|\mc{D}|}\sum_{j\in[i]_{\mc V}}\sum_{d\in\mc{D}} P_{j,d}(T) d_{[j]}(i).
\end{equation}
Informally, the convex combinations of all assigned dictionary patches (with weights given by $P_{j}(T)$) are pasted onto the graph at each $j\in\mc{V}$, and this weighted superposition is evaluated at each vertex $i\in\mc{V}$, to which all patches assigned in a neighborhood of $i$ contribute.

\begin{figure}[t]
\centerline{
\includegraphics[width=0.45\textwidth]{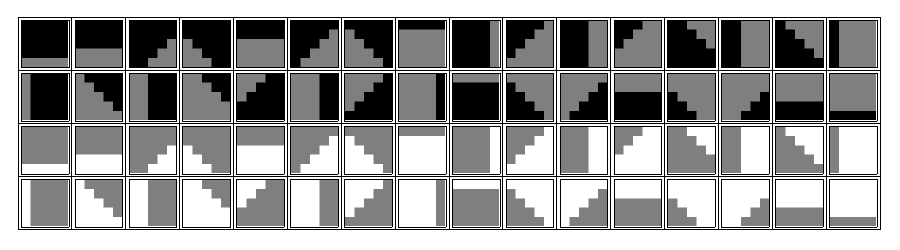}
\includegraphics[width=0.55\textwidth]{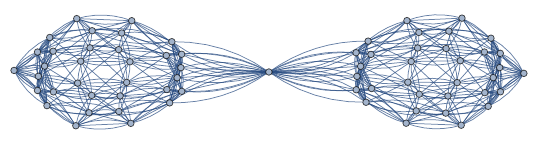}
}
\centerline{
\parbox{0.5\textwidth}{\centering (a)}
\parbox{0.5\textwidth}{\centering (b)}
}
\caption{
\textbf{(a)} A dictionary of $5\times 5$ \textit{labeled} patches with foreground labels (black and white) and background labels (gray), complemented by the three constant patches (not shown). \textbf{(b)} The patch adjacency graph is structured so as to favour transitions from each foreground class to itself or to the background, respectively, rather than direct transitions between both foreground classes. The three constant patches are represented by the center vertex (background label) and the two extreme vertices (foreground labels), respectively. This structure of the patch dictionary graph encodes the prior knowledge that spatially connected components of both foreground regions should be \textit{separated} by the background region.
}
\label{fig:dictionary-graph}
\end{figure}

\subsection{Symmetry, Uncertainty Quantification}\label{sec:Symmetry-UQ}

Figure \ref{fig:dictionary-lines-5x5} shows a small dictionary of labeled binary template patches with lines or crossing lines, together with visualizations of the adjacency structure based on the patch similarity function \eqref{eq:om-patch-binary}. The patch dictionary graph reveals that template patches are generally adjacent to multiple other template patches.

Figure \ref{fig:5x5-lines}(a) shows input data $W^{0}$ which initialize the P-AF by \eqref{eq:def-PO}. The labelings (b)-(d) determined by the P-AF, based on the dictionary graph of Figure \ref{fig:dictionary-lines-5x5}, differ by encoding no preference (default: (b)) or a slight preference for foreground structure relative to the background ((c), (d); see the caption). This encoding can be easily achieved by modifying the assignment vector field $d_{[i]}$ in \eqref{eq:def-PO-a} corresponding to the labeled template patches of the dictionary $\mc{D}$, accordingly. The P-AF returns the `energetically best' labeling as measured by the objective function \eqref{eq:def-J-P-AF}.

Panels (e)--(g) display the corresponding mean patch assignment function \eqref{eq:def-mean-ell}. Colors close to white indicate large uncertainty and plausible alternative labelings, respectively, from which one could even sample using $P(T)$. This property of the P-AF is a consequence of using \textit{labeled} patch dictionaries and the symmetry of patch dictionary graphs.


%
\begin{figure}[t]
\centerline{
\includegraphics[width=0.15\textwidth]{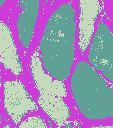}
\hspace{0.02\textwidth}
\includegraphics[width=0.15\textwidth]{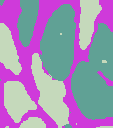}
\hspace{0.02\textwidth}
\includegraphics[width=0.15\textwidth]{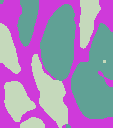}\hspace{0.02\textwidth}
\includegraphics[width=0.15\textwidth]{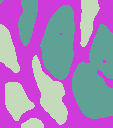}
}
\centerline{
\parbox{0.15\textwidth}{\centering (a)}\hspace{0.02\textwidth}
\parbox{0.15\textwidth}{\centering (b)}\hspace{0.02\textwidth}
\parbox{0.15\textwidth}{\centering (c)}\hspace{0.02\textwidth}
\parbox{0.15\textwidth}{\centering (d)}
}
\vspace{0.1cm}
\centerline{
\includegraphics[width=0.5\textwidth]{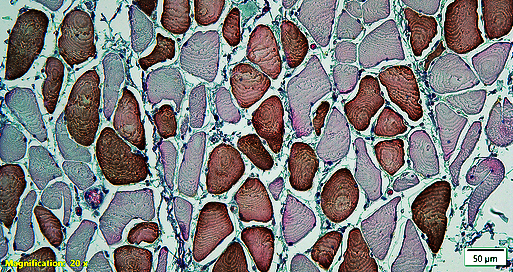}
\includegraphics[width=0.5\textwidth]{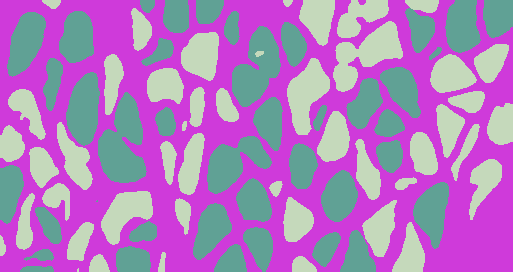}
}
\centerline{
\parbox{0.5\textwidth}{\centering (e)}
\parbox{0.5\textwidth}{\centering (f)}
}
\caption{
\textbf{(a)} Initialization of the patch-AF, obtained by pixelwise classification of the data depicted by Figure \ref{eq:MyoST-input}(b). \textbf{(b)--(d)} Results of the patch-AF with increasing regularization, which enforces transitions to the background label, in particular at locations with uncertain labeling decisions (compare (b) and (d)). \textbf{(e)} Input data resulting from preprocessing (histogram equalization) of the raw data depicted by Figure \ref{eq:MyoST-input}(a). \textbf{(f)} Labeling returned by the patch-AF. Regularization is entirely and explicitly encoded by the dictionary patch adjacency graph (Figure \ref{fig:dictionary-graph}). 
}
\label{fig:MyoST-results}
\end{figure}

\subsection{Real Data Example}\label{sec:Real-World-Exp}

Figure \ref{eq:MyoST-input}(a) shows a light microscopic image of anti-myosin immunostained cross-sectional skeletal muscle, which was obtained in a larger study of fiber type composition analysis in respiratory skeletal muscle of COVID-19 positive patients. A key preparatory step of the entire data analysis pipeline concerns the segmentation of fibers in order to measure fiber size and further morphological properties.

Figure \ref{fig:dictionary-graph} shows the chosen patch dictionary $\mc{D}$ and the corresponding dictionary graph using the similarity function \eqref{eq:om-patch-weight} (only edges with the three largest weights are shown, for better visibility). As detailed in the caption, the \textit{structure} of $\mc{D}$ has been chosen so as to enforce the expected topological structure: preference for spatial transitions between either foreground label and the background label, rather than direct transitions between foreground labels.

Figure \ref{fig:MyoST-results} shows that regularization via the interaction of labeled patch templates is effective, in particular regarding the suppression of nuisance background structure, \textit{without} any user parameter to be tuned.

\section{Conclusion}\label{sec:conclusion}

We extended the assignment flow approach towards regularized label assignments which, besides spatial regularization, additionally takes into account \textit{label interaction}. The interaction of labels is entirely encoded by a dictionary of labeled patch templates and a corresponding patch dictionary graph, which quantifies the local consistency of spatially adjacent patch template assignments. In this way, local constraints effectively constrain global labelings of image feature data, which result from geometric numerical integration of the Riemannian patch assignment gradient flow.

Our further work will study in this context the design and structure of labeled patch dictionaries, as generators of nonlocal data labelings on graphs. In particular, discrete symmetries will be examined from the general viewpoint on locally equivariant networks that are generated by geometric flows, as developed by \cite{Cassel:2025aa}.

\section*{Acknowledgments}
This work is funded by the Deutsche Forschungsgemeinschaft (DFG) under Germany's Excellence Strategy EXC-2181/2 - 390900948 (the Heidelberg STRUCTURES Excellence Cluster), and grant SCHN 457/17-2 within the priority programme SPP 2298: ``Theoretical Foundations of Deep Learning.''

\bibliographystyle{alpha} 
\bibliography{af_patches}

\end{document}